\par\addvspace{2mm}\hrule height 0.03cm 
\BODY\end{table}
\DeclareMathOperator*{\argmin}{arg\,min}
\theoremstyle{plain}
\newtheorem{definition}{Definition}
\newtheorem{proposition}{Proposition}
\newtheorem{remark}{Remark}
\newtheorem{assumption}{Assumption}
\newtheorem{lemma}{Lemma}
\begin{document}
% \title{\LARGE \bf{Hierarchical Relaxation for Safety-critical Control \\ with Contradictory Safety Conditions of Quadruped Robots}}
% \title{\LARGE \bf{A Data-driven Method for Designing Control Barrier Functions \\from State Constraint Functions for Safety-critical Control}}
\title{\LARGE \bf{A Data-driven Method for Safety-critical Control: \\
Designing Control Barrier Functions from State Constraints}}
\author{Jaemin Lee, Jeeseop Kim, and Aaron D. Ames
\thanks{This work is supported by Dow under the project  \#227027AT.}
\thanks{The authors are with the Department of Mechanical and Civil Engineering, California Institute of Technology (Caltech), Pasadena, CA, USA {\tt\small \{jaemin87, jeeseop, ames\}@caltech.edu}} 
}

\maketitle

\begin{abstract}
This paper addresses the challenge of integrating explicit hard constraints into the control barrier function (CBF) framework for ensuring safety in autonomous systems, including robots. We propose a novel data-driven method to derive CBFs from these hard constraints in practical scenarios. Our approach assumes that the forward invariant safe set is either a subset or equal to the constrained set. The process consists of two main steps. First, we randomly sample states within the constraint boundaries and identify inputs meeting the time derivative criteria of the hard constraint; this iterative process converges using the Jaccard index. Next, we formulate CBFs that enclose the safe set using the sampled boundaries. This enables the creation of a control-invariant safe set, approaching the maximum attainable level of control invariance. This approach, therefore, addresses the complexities posed by complex autonomous systems with constrained control input spaces, culminating in a control-invariant safe set that closely approximates the maximal control invariant set.
\end{abstract}

% \begin{IEEEkeywords}
% Safety-Critical Control, Control Barrier Function, Contradictory Safe Sets
% \end{IEEEkeywords}

\section{Introduction}
\label{section1}
The safety of dynamical systems is of critical importance, especially in the context of autonomous systems, such as robotics, which are extensively deployed in complex and dynamic environments. Control Barrier Functions (CBFs) \cite{ames2014control, ames2016control} have emerged as a powerful tool to ensure the safety of dynamical and control systems via safety filters---optimization-based controllers that minimally modify a nominal input to produce a safe input. These techniques have been robustified to handle disturbances either by relaxing the safe set rendered invariant \cite{xu2015robustness, alan2023control, kang2023verification}, or by incorporating disturbance observers \cite{dacs2022robust, agrawal2022safe, wang2023disturbance}. Safety-critical controllers have been developed to ensure both the safety and stability of systems by amalgamating CBFs with Control Lyapunov Functions (CLFs) \cite{jankovic2018robust, taylor2020adaptive}. Furthermore, CBF-based frameworks provide an effective means to address various uncertainties in dynamical systems, including sector-bounded uncertainties \cite{buch2021robust}, stochastic uncertainties \cite{singletary2022safe}, and parametric model uncertainties \cite{taylor2020adaptive}. However, the effectiveness of these approaches critically hinges on accurate modeling of the dynamical systems, uncertainties, and disturbances.

As data-driven techniques have advanced, learning-based approaches have been introduced to enhance system safety. Supervised learning has been leveraged to ensure system safety in the presence of model uncertainty when valid CBFs are available \cite{taylor2020learning}. Similarly, CBFs have been estimated from sensor data using supervised learning techniques (support vector machine) \cite{srinivasan2020synthesis}. Imitation learning has been employed to train feedback controllers based on neural networks, which incorporate high-order CBFs \cite{yaghoubi2020training}. Deep neural networks have been used to learn CBFs from expert-demonstrated trajectories \cite{robey2020learning}. Reinforcement learning algorithms have been combined with optimization techniques or model information, including CBFs, to revise policies and prevent safety constraint violations \cite{marvi2021safe, ma2021model}. Some studies have also explored reinforcement learning for learning model uncertainty embedded in CBF and CLF constraints \cite{choi2020reinforcement}. Most of these learning-based approaches have been primarily focused on addressing issues stemming from uncertainties in models or measurements.

\begin{figure}[t] 
\centering
\includegraphics[width=\linewidth]{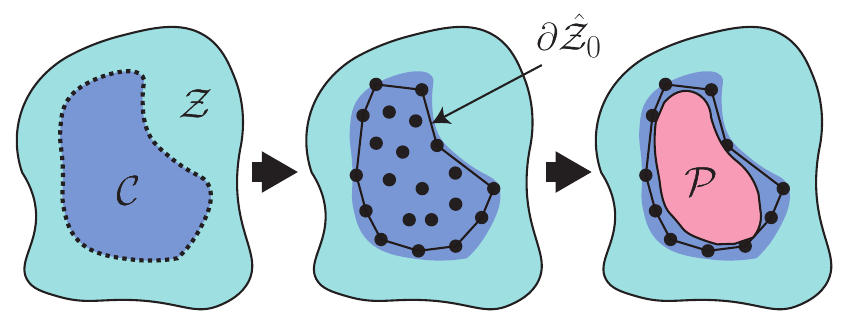}
% \vspace{-7mm}
\caption{\textbf{Overview of the proposed method}: $\mathcal{Z}$ denotes a constrained state set defined by hard constraint functions, while $\mathcal{C}$ is a forward-invariant safe set, which is unknown. $\partial \hat{\mathcal{Z}}_0$ represents the boundary of a set of data satisfying conditions that guarantee safety assurance. $\mathcal{P}$ refers to an approximated safe set associated with the CBF formulated by available data.}
\label{Fig1}
\vspace{-0.5cm}
\end{figure}

The goal of this work is to combine the advantages of data-driven approaches and the formal safety guarantees enjoyed by CBFs.  A key component of implementing CBFs in practice is the synthesis of the (forward invariant) safe set.  To enable the broader use of CBFs in practice, this paper designs CBFs that closely approximate safe sets encoding state constraints via a data-driven optimization-based approach.

% They key contribution of this paper is to utilize data-driven methods to design CBFs. When dealing with state constraints with a system model and a restricted input space, obtaining a safe set, which is forward invariant, and designing corresponding CBFs are of significant importance for safety-critical control. In many application studies, numerous CBF formulations have relied on heuristic approaches, and in some cases, these formulations have been applied without demonstrating the forward invariance of the associated safe sets. In this study, we introduce a methodology for designing CBFs that closely match the shape of the state constraints while ensuring the safe set is forward invariant, achieved through a data-driven approach employing optimization techniques.

% \textcolor{red}{Add a few sentences given the key contributions of the paper before they read more about all the related work.   Just a quick overview that you will expand upon the contributions section.}

\subsection{Related Work}

Recently, CBF-based frameworks for safety-critical control have found applications in a diverse range of autonomous systems. For example, the most prominent application of CBFs is in autonomous vehicles \cite{lyu2022adaptive, seo2022safety, choi2020reinforcement}, particularly due to the simplicity of the system model. CBFs have also been effectively applied to control simpler systems such as Segway platforms and drones, enabling obstacle avoidance \cite{taylor2020learning, molnar2021model, singletary2021comparative}. For more complex systems like legged robots, which have intricate models, defining CBFs can be challenging. Hence, reduced-order models combined with CBFs have been used to safely control legged robots such as quadruped and bipedal robots \cite{lee2023hierarchical,cosner2022safety,nguyen20163d,agrawal2017discrete}. In cases where the full-body model is considered, the CBF-based approach has been integrated with whole-body control to ensure safety in controlling complex robotic systems, such as humanoid robots \cite{khazoom2022humanoid}. While these studies demonstrate the effectiveness of CBFs, they assume the availability of valid CBFs or use CBFs defined based on physical information to render the forward-invariant safe set \cite{lee2023hierarchical,kim2023safety}. However, when considering a complex model such as a full-body model, the forward invariance of safe sets is assured using CBFs for joint limits, although CBFs for self-collisions may be used without explicit verification of forward invariance \cite{khazoom2022humanoid}. For this reason, some locomotion controller directly employed sampling-based reachability analysis to resolve the model discrepancy instead of using CBF-based approaches \cite{lee2020reachability}.

The applications of safety-critical control based on CBFs are becoming increasingly diverse and complex \cite{10266799}. While some well-established examples exist for simple models \cite{ames2016control, chen2021backup}, many application studies employ hard inequality constraints as heuristic formulations of CBFs without demonstrating that the safe set remains forward invariant, with respect to the given system model and constrained input space \cite{lee2023hierarchical, kim2023safety, nishimoto2022collision, ferraguti2022safety}. Since the forward invariance of the safe set is contingent on the system model and constrained input space, it is imperative to verify whether the set remains forward invariant. Additionally, formulating CBFs while considering hard inequality constraints, system models, and constrained input spaces is essential to ensure the safety of systems controlled by CBF-based frameworks. Hamilton–Jacobi–Bellman (HJB) methods can factor in all of the above constraints, and calculate the maximal (control) invariant set \cite{bansal2017hamilton} and an associated CBF \cite{choi2021robust}; yet these methods do not scale well with the dimension of the system.  While the connection between data-driven methods and HJB methods have been explored \cite{fisac2018general,herbert2021scalable}.

\subsection{Contribution}
This paper introduces a novel and practically realizable approach to formulating CBFs using data-driven methods, particularly when the safe set is unknown, and the constrained state space is the only information available. In contrast to most prior research, which assumed a pre-defined safe set and uncertain system models when utilizing data-driven techniques, our study addresses the challenge of deriving a safe set and the corresponding CBF when dealing only with state-dependent hard constraints. To this end, we present a systematic process for generating meaningful data with well-informed guidance to approximate a forward invariant set. Subsequently, CBFs are approximated to encapsulate a safe set, which is a subset of the forward invariant set obtained through our data-driven approach, as illustrated in Fig.\ref{Fig1}. 

The contribution of this work is threefold. Firstly, we introduce a guideline for data generation utilizing the Jaccard index. This is a significant departure from exhaustive data generation and collection, which often suffer from \textit{the curse of dimensionality}. Secondly, our approach allows for the approximation of CBFs without the need for finite-horizon learning through high-fidelity simulations. This method relies on sets approximated through data and an optimization process aimed at maximizing the set influenced by the approximated CBFs, eliminating the need to consider long-duration trajectories. Lastly, our method enables the composition of multiple sets associated with multiple CBFs accommodating hard constraints. This enables us to expand the safe set associated with a given dynamic system and constrained input space.

Our paper is organized as follows. Section \ref{section2} provides an overview of hard constraints, independent of system models and input spaces, as well as CBFs, and then we define our problem of this study. In Section \ref{section3}, we describe the proposed methods along with relevant assumptions and generalize our approach. Lastly, Section \ref{section4} provides simulation results to show the effectiveness of our methods. 

\section{Preliminaries}
\label{section2}
In this section, we lay the foundation by introducing key concepts and definitions, including hard constraint functions (HCFs) and control barrier functions (CBFs). Subsequently, we precisely articulate our problem within the context of these established concepts.

\subsection{Hard Constraints and Control Barrier Functions}
In the context of safety-critical control issues like collision avoidance, the establishment of multiple explicit hard constraints is common practice. These constraints are based on kinematic data and are primarily designed to ensure safety. Typically, these hard constraints are formulated in relation to the system's state, as illustrated below:
\begin{definition} \label{def1}
Considering a state space $\mathcal{X} \subseteq \mathbb{R}^{n}$, we define safety-critical hard constraint functions (HCFs) designed to rigorously guarantee kinematic safety. These functions operate solely on the system's state and disregard system dynamics, taking the form of $z(\bm{x}) \geq 0$, where $z$ is continuously differentiable. We establish a constrained state set denoted as $\mathcal{Z} \coloneqq {\bm{x}\in \mathcal{X} : z(\bm{x}) \geq 0 }$ to encapsulate states adhering to this constraint.
\end{definition}
\noindent
However, a critical challenge arises in that we cannot ensure the forward invariance of the set $\mathcal{Z}$. The forward invariance of sets for controlled systems is defined as follows:
\begin{definition}
    A set $\mathcal{C}$ is control (forward) invariant for a dynamical system if for any initial state $\bm{x}_0 \in \mathcal{C}$, there exist a piecewise-continuous input signal $\bm{u}(.) \in \mathcal{PU}(\mathbb{R}_{\geq0}, \mathcal{U})$ fulfilling $\bm{x}(t) \in \mathcal{C}$ for all time $t \in \mathbb{R}_{\geq0}$. Here, $\mathcal{U} \subseteq \mathbb{R}^{m}$ is a constrained input space.    
\end{definition}
To address this challenge in the context of safety-critical control problems, we frequently turn to the utilization of control barrier functions (CBFs). CBFs are instrumental in ensuring safety by incorporating both the system's dynamics and the constrained input space. This CBF-based theory begins by defining the $0$-superlevel set as $\mathcal{C}\coloneqq \{ \bm{x} \in \mathcal{X}: h(\bm{x}) \geq 0 \}$, where the function $h$ is continuously differentiable. To elaborate, let's consider a state space $\mathcal{X}$ and an input space $\mathcal{U} \subseteq \mathbb{R}^{m}$. A nonlinear system can be described as follows:
\begin{equation}
\dot{\bm{x}} = f(\bm{x}) + g(\bm{x}) \bm{u}
\end{equation}
where $\bm{x} \in \mathcal{X}$, $\bm{u} \in \mathcal{U}$, $f: \mathcal{X} \to \mathbb{R}^{n}$, and $g: \mathcal{X} \to \mathbb{R}^{n\times m}$. In addition, both $f$ and $g$ are locally Lipschitz continuous. In the study presented in \cite{ames2014control}, CBFs are defined by considering the system model and safe set as follows:
\begin{definition} \label{def2}
    Given a safe set $\mathcal{C}$, we define a function $h$ as a control barrier function (CBF) if there exists a function $\alpha\in \mathcal{K}^{e}_{\infty}$\footnote{Here, $\mathcal{K}^{e}_{\infty}$ represents the extended class of $\mathcal{K}_{\infty}$ functions. Specifically, if a continuous function $\alpha$ exhibits strict monotonicity within the open interval $(-\infty, \infty)$ while satisfying $\alpha(0) = 0$, $\alpha(-\infty) = -\infty$, and $\alpha(\infty) = \infty$, it qualifies as an extended class $\mathcal{K}{\infty}$ function.} such that for all $\bm{x} \in \mathcal{C}$, the following condition holds:
    \begin{equation*}
    \sup_{\bm{u}\in \mathcal{U}} \dot{h}(\bm{x}, \bm{u}) = \sup_{\bm{u} \in \mathcal{U}} \left[\mathcal{L}_f h(\bm{x}) + \mathcal{L}_{g} h(\bm{x}) \bm{u}) \right] \geq - \alpha(h(\bm{x}))
    \end{equation*}
    where $\mathcal{L}_{f} h(\bm{x})$ represents the Lie derivative of $h$ with respect to $\bm{x}$ and is given by $\frac{\partial h}{\partial \bm{x}}(\bm{x}) f(\bm{x})$, while $\mathcal{L}_g h(\bm{x})$ is the Lie derivative with respect to $\bm{x}$ and is expressed as $\frac{\partial h}{\partial \bm{x}}(\bm{x}) g(\bm{x})$. It is noted that the safe set $\mathcal{C}$ exhibits forward invariance in this context.
\end{definition}
\noindent
As previously highlighted, the determination of whether the defined HCFs also qualify as CBFs remains uncertain. To elaborate, if the set $\mathcal{Z}$ fails to exhibit forward invariance, the utilization of $z$ in Definition \ref{def1} as a CBF, akin to the role of $h$ in Definition \ref{def2}, becomes problematic. Strictly speaking, it is important to note that the safe set $\mathcal{C}$ is a subset of or equal to $\mathcal{Z}$,  $\mathcal{C} \subseteq \mathcal{Z}$. If the set $\mathcal{C}$ represents the largest forward invariant set, we refer to it as \textit{the maximum control invariant safe set} for satisfying the hard constraint $z(\bm{x}) \geq 0$.

\subsection{Problem Definition}
Given a HCF $z$, along with a state set $\mathcal{X}$ and an input set $\mathcal{U}$, let us consider a scenario where the forward invariance of $\mathcal{Z}$ cannot be guaranteed. In this context, we aim to design a CBF of the same order as the HCF, along with its associated safe set. This CBF is defined as follows:
\begin{equation}
    h(\bm{x}) = z(\mathbf{D}\bm{x}+\bm{c})+\varepsilon
\end{equation}
where $\mathbf{D} = \textrm{diag}(d_1, \cdots, d_n)$, $\bm{c}\in \mathbb{R}^{n}$, and $\varepsilon \in \mathbb{R}$. The safe set is defined as: $\mathcal{C} = \{\bm{x}\in \mathcal{Z}: h(\bm{x}) \geq 0\}$. Given the various potential candidates for CBFs, it becomes imperative to select one that effectively covers a broader state space while maintaining control over the system with a control input drawn from the bounded input set $\mathcal{U}$. To formalize this problem, we can express it as follows:
\begin{equation}
    \begin{split}
        \textrm{find} & \quad \mathbf{D}, \: \bm{c}, \: \varepsilon\\
        \textrm{s.t.} & \quad \mathcal{C} \textrm{ is forward invariant},\\
        & \quad d_{1}, \cdots, d_{n} \in \mathbb{R}, \: \bm{c} \in \mathbb{R}^{n}, \: \varepsilon \in \mathbb{R}.
    \end{split}
\end{equation}
This paper offers a solution to the aforementioned problem through the utilization of data-driven techniques and convex approximations. Moreover, it facilitates the acquisition of a safe set that is forward invariant by combining multiple barrier functions through logical operators while resolving our problem. 

\section{The Proposed Data-driven Methods}
\label{section3}
In this section, we introduce data-driven approaches to determine the constant variables $\mathbf{D}$, $\bm{c}$, and $\varepsilon$ for a single hard constraint. Subsequently, we extend this method to address multiple CBFs. Notably, Definition \ref{def1} establishes a correlation between the lower bound of the time derivative of the CBF and the forward invariance of the set. From this definition, it becomes evident that a crucial characteristic is the requirement for the lower bound of the time derivative of the function to be $0$ at the boundary of the forward invariant set, denoted as $\partial \mathcal{C} \coloneqq \{ \bm{x} \in \mathcal{X}: h(\bm{x}) = 0 \}$:
\begin{equation} \label{CBF_ineq_bound}
    \sup_{\bm{u}\in \mathcal{U}} \dot{h}(\bm{x}, \bm{u}) \geq 0 \quad \forall \bm{x} \in \partial \mathcal{C},
\end{equation}
given that for any function $\alpha$, we have $\alpha(h(\bm{x})) = 0$ for all $\bm{x}\in \partial \mathcal{C}$, the inequality above implies the following statement:
\begin{equation} \label{CBF_eq_bound}
    \exists \bm{u} \in \mathcal{U}: \dot{h}(\bm{x}, \bm{u})=0 \quad \forall \bm{x} \in \partial \mathcal{C}.
\end{equation}
In our context, we start with the assumption that we have a candidate CBF (HCF) denoted as $z(\bm{x})$ as the only available information. Therefore, it becomes necessary to rephrase the conditions expressed in the above inequality using the function $z$.

\subsection{Uniform scaling and Offset}
To simplify our problem, we introduce the following assumption to restate the aforementioned conditions in terms of $z$, particularly in cases where the safe set $\mathcal{C}$ is defined by the function $h$:
\begin{assumption} \label{assumption1}
The CBF $h$ is derived from the HCF $z$ with the introduction of a constant offset $\varepsilon$ and a scaling parameter $d$. Under this framework, we make an assumption regarding the partial derivatives of these functions, which can be expressed as follows:
\begin{equation}
    \frac{\partial h}{\partial \bm{x}} (\bm{x}) =  \frac{\partial}{\partial \bm{x}} z(\mathbf{D}\bm{x} + \bm{c}) = d \frac{\partial z}{\partial \bm{x}}(\bm{x}).
\end{equation}
Here, it's important to note that $d$ is strictly greater than zero. For instance, if we consider the HCF to be a linear function, such as $z(\bm{x}) = \mathbf{A}\bm{x} + b$, and assume that all diagonal components of $\mathbf{D}$ are set to $d$, the equation above holds as follows:
\begin{equation}
    \frac{\partial }{\partial \bm{x}} \left( \mathbf{A} (\mathbf{D}\bm{x} + \bm{c}) + b \right) = \mathbf{A}\mathbf{D} = d \mathbf{A} = d \frac{\partial z}{\partial \bm{x}} (\bm{x}).
\end{equation}
\end{assumption}
\noindent
Based on Assumption \ref{assumption1}, we can express the conditions pertaining to boundary states in the safe set $\partial \mathcal{C}$, defined as ${\bm{x} \in \mathcal{X}: h(\bm{x}) = 0 }$, in the following manner:
\begin{equation}
    \sup_{\bm{u} \in \mathcal{U}} \dot{z}(\bm{x}, \bm{u}) = \sup_{\bm{u} \in \mathcal{U}} d^{-1} \dot{h}(\bm{x}, \bm{u}) \geq 0, \quad \forall \bm{x}\in \partial \mathcal{C}.
\end{equation}
The above inequality holds true since $\sup_{\bm{u}\in \mathcal{U}} \dot{h}(\bm{x}, \bm{u}) \geq 0$ for all states $\bm{x} \in \partial \mathcal{C}$ and $d>0$. Consequently, the states can be elements of $\partial \mathcal{C}$ as long as they satisfy $\dot{z}(\bm{x}, \bm{u}) =0$. With this premise, we introduce a subset of $\mathcal{Z}$ defined as follows:
\begin{equation}
    \mathcal{Z}_{0} = \{ \bm{x} \in \mathcal{Z} : \exists \bm{u} \in \mathcal{U} \quad \textrm{s.t.} \quad \dot{z}(\bm{x}, \bm{u}) = 0 \}.
\end{equation}
While obtaining $\mathcal{Z}_0$ can be straightforward for certain types of systems and input spaces, the complexity and nonlinearity of system dynamics render the analytical process of acquiring $\mathcal{Z}_0$ challenging and often unfeasible. Hence, in this section, we use a data-driven approach to approximate $\mathcal{Z}_0$.

The data-driven method involves two sequential steps. First, we randomly sample state points $\bm{x} \sim U(\underline{\bm{a}}, \overline{\bm{a}})$, where $\underline{\bm{a}} \in \mathbb{R}^{n}$ and $\overline{\bm{a}} \in \mathbb{R}^{n}$ are the upper and lower bounds of the state space, respectively. These sampled states constitute a set $\mathcal{S}$. Subsequently, we assess whether these sampled states satisfy the hard constraint $z(\bm{x}) \geq 0$ and retain the ones that do:
\begin{equation}
    \mathcal{Z}_{s} = \{\bm{x} \in \mathcal{S}: z(\bm{x}) \geq 0 \}
\end{equation}
where $\mathcal{Z}_{s} = \mathcal{S} \bigcap \mathcal{Z}$. In the second step, we identify state samples within $\mathcal{Z}_s$ that possess at least one control input leading to $\dot{z}(\bm{x}, \bm{u}) = 0$ within the constrained input set $\mathcal{U}$:
\begin{equation} \label{opt_qp_zero}
    \hat{\mathcal{Z}}_0 = \{ \bm{x} \in \mathcal{Z}_{s}: \exists \bm{u} \in \mathcal{U}, \quad \textrm{s.t.} \quad \dot{z}(\bm{x}, \bm{u})=0  \}
\end{equation}
where $\hat{\mathcal{Z}}_{0} \subset \mathcal{Z}_{0}$. To determine the existence of a feasible control input for a given state sample $\bm{x} \in \mathcal{Z}_{s}$, we can generally solve the following optimization problem:
\begin{equation}
    \begin{split}
        \min \quad \| \mathcal{L}_{f}z(\bm{x}) + \mathcal{L}_{g}z(\bm{x}) \bm{u}\|^{2}, \quad \textrm{s.t.} \quad \bm{u} \in \mathcal{U} .
    \end{split}
\end{equation}
When the optimal objective of the above problem is $0$, the state $\bm{x}$ belongs to $\hat{\mathcal{Z}}_{0}$. It is important to note that $\hat{\mathcal{Z}}_{0}$ constitutes a discrete set, and the effectiveness of its approximation depends on the sampling strategy employed.

A reasonable number of samples is required to obtain an accurate approximation of the set $\hat{\mathcal{Z}}_0$. To assess the convergence of the sampling process, we employ the Jaccard index, defined as follows: 
\begin{definition}
Given two discrete sets $\mathcal{A}_1$ and $\mathcal{A}_2$, the Jaccard index of the sets is defined as the size of their intersection divided by the size of their union:
    \begin{equation}
        \mathcal{J}(\mathcal{A}_1, \mathcal{A}_2) = \frac{|\mathcal{A}_1 \bigcap \mathcal{A}_2|}{|\mathcal{A}_1 \bigcup \mathcal{A}_2 |} = \frac{\textrm{card}(\mathcal{A}_1 \bigcap \mathcal{A}_2)}{\textrm{card}(\mathcal{A}_1 \bigcup \mathcal{A}_2)}
    \end{equation}
    where $|\mathcal{A}|$ denotes the size of set $\mathcal{A}$, and we utilize the term \textit{cardinality} denoted by $\textrm{card}(\mathcal{A})$ to represent it.
\end{definition}
\noindent
Considering that $\hat{\mathcal{Z}}_0 \subseteq \mathcal{S}$, we compute the Jaccard index using the sets $\hat{\mathcal{Z}}_{0}$ and $\mathcal{S}$ as follows: $\mathcal{J}(\hat{\mathcal{Z}}_0, \mathcal{S}) = \frac{\textrm{card}(\hat{\mathcal{Z}}_{0})}{\textrm{card}(\mathcal{S})}$, where $0 \leq \mathcal{J}(\hat{\mathcal{Z}}_{0}, \mathcal{S}) \leq 1$. To explicitly consider the number of samples denoted as $N_s$, we employ $\mathcal{J}_{[N_s]}(\hat{\mathcal{Z}}_0, \mathcal{S})$ as the Jaccard index, where $\textrm{card}(\mathcal{S}) = N_s$.
\begin{proposition} \label{prop1}
If we draw an infinite number of state data to define $\hat{\mathcal{Z}}_{0}$, $\mathcal{J}_{[N_s]}(\hat{\mathcal{Z}}_0, \mathcal{S})$ converges to a constant number between $0$ and $1$, signifying that the set $\hat{\mathcal{Z}}_{0}$ approaches $\mathcal{Z}_0$.
\end{proposition}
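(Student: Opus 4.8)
The plan is to recognize that the Jaccard index $\mathcal{J}_{[N_s]}(\hat{\mathcal{Z}}_0,\mathcal{S})$ is nothing but an empirical frequency and to invoke the strong law of large numbers. Writing $\mathcal{S}=\{\bm{x}_1,\dots,\bm{x}_{N_s}\}$ with the $\bm{x}_i$ drawn i.i.d.\ from $U(\underline{\bm{a}},\overline{\bm{a}})$ on the box $B\coloneqq[\underline{a}_1,\overline{a}_1]\times\cdots\times[\underline{a}_n,\overline{a}_n]$, the construction of $\hat{\mathcal{Z}}_0$ in \eqref{opt_qp_zero} gives $\hat{\mathcal{Z}}_0=\mathcal{S}\cap\mathcal{Z}_0$, so that
\begin{equation*}
\mathcal{J}_{[N_s]}(\hat{\mathcal{Z}}_0,\mathcal{S})=\frac{\mathrm{card}(\mathcal{S}\cap\mathcal{Z}_0)}{N_s}=\frac{1}{N_s}\sum_{i=1}^{N_s}\mathbbm{1}\!\left[\bm{x}_i\in\mathcal{Z}_0\right].
\end{equation*}
Thus the quantity of interest is the sample mean of i.i.d.\ Bernoulli variables $Y_i\coloneqq\mathbbm{1}[\bm{x}_i\in\mathcal{Z}_0]$.

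First I would verify that $\mathcal{Z}_0$ is Lebesgue measurable, so that $\mathbb{P}(\bm{x}_i\in\mathcal{Z}_0)$ is well defined. Define $\phi(\bm{x})\coloneqq\min_{\bm{u}\in\mathcal{U}}\|\mathcal{L}_f z(\bm{x})+\mathcal{L}_g z(\bm{x})\bm{u}\|^2$. Since $z\in C^1$ and $f,g$ are locally Lipschitz, the maps $\bm{x}\mapsto\mathcal{L}_f z(\bm{x})$ and $\bm{x}\mapsto\mathcal{L}_g z(\bm{x})$ are continuous; assuming $\mathcal{U}$ is compact, Berge's maximum theorem makes $\phi$ continuous. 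Because $\bm{x}\in\mathcal{Z}_0$ iff $z(\bm{x})\ge0$ and $\phi(\bm{x})=0$, the set $\mathcal{Z}_0=\{z\ge0\}\cap\phi^{-1}(0)$ is an intersection of closed sets, hence closed and measurable.

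Next I would apply the strong law of large numbers to $\{Y_i\}$: almost surely,
\begin{equation*}
\mathcal{J}_{[N_s]}(\hat{\mathcal{Z}}_0,\mathcal{S})\;\xrightarrow[N_s\to\infty]{}\;\mathbb{E}[Y_1]=\mathbb{P}(\bm{x}_1\in\mathcal{Z}_0)=\frac{\lambda(\mathcal{Z}_0\cap B)}{\lambda(B)}\;\eqqcolon\;p,
\end{equation*}
where $\lambda$ denotes Lebesgue measure; clearly $0\le p\le1$, and $p$ is a fixed constant determined only by $z$, $\mathcal{U}$, and $B$ (in particular independent of the realization of the samples). If $\mathcal{Z}_0$ has nonzero and non-full volume in $B$---the generic situation when $\partial\mathcal{C}$ is a genuine hypersurface separating the regions where the constraint can versus cannot be maintained---then $p\in(0,1)$ strictly.

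Finally, to make precise the claim that $\hat{\mathcal{Z}}_0$ ``approaches'' $\mathcal{Z}_0$, I would upgrade the single-set statement to convergence of the empirical measure $\mu_{N_s}\coloneqq\frac{1}{N_s}\sum_{i=1}^{N_s}\delta_{\bm{x}_i}$: by the Glivenko--Cantelli theorem (or the law of large numbers applied to each fixed Borel set), for every Borel $A$ with $\lambda(\partial A)=0$ one has $\mu_{N_s}(A)\to\lambda(A\cap B)/\lambda(B)$ almost surely, so the samples populate $\mathcal{Z}_0$ with the correct conditional-uniform density; consequently $\hat{\mathcal{Z}}_0=\mathcal{S}\cap\mathcal{Z}_0$ becomes dense in $\overline{\mathrm{int}\,\mathcal{Z}_0}$, which coincides with $\mathcal{Z}_0$ whenever $\mathcal{Z}_0$ equals the closure of its interior. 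I expect this last point---pinning down the exact sense in which a growing discrete set approaches the continuum set $\mathcal{Z}_0$, together with the mild regularity hypotheses (compactness of $\mathcal{U}$, $\mathcal{Z}_0$ being the closure of a full-dimensional open set) needed to guarantee both measurability and $p>0$---to be the main part requiring care; the convergence of the Jaccard index itself is an immediate consequence of the strong law of large numbers.
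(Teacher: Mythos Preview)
Your proposal is correct and takes a genuinely different---and more rigorous---route than the paper. The paper argues by bounding the one-step increment $\Delta\mathcal{J}_{[N_s]}(\hat{\mathcal{Z}}_0,\mathcal{S})\coloneqq|\mathcal{J}_{[N_s+\Delta N_s]}-\mathcal{J}_{[N_s]}|$ above by $\bigl|\tfrac{N_z+\Delta N_s}{N_s+\Delta N_s}-\tfrac{N_z}{N_s}\bigr|$ (the worst case being that every additional sample lands in $\hat{\mathcal{Z}}_0$), shows this bound tends to $0$ as $N_s\to\infty$, and then appeals to the uniform distribution of the samples to conclude that $\mathcal{J}_{[N_s]}\to\mathcal{J}(\mathcal{Z}_0,\mathcal{Z})$. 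You instead identify $\mathcal{J}_{[N_s]}$ directly as the sample mean of i.i.d.\ Bernoulli indicators $\mathbbm{1}[\bm{x}_i\in\mathcal{Z}_0]$ and invoke the strong law of large numbers, obtaining almost-sure convergence to the volume ratio $p=\lambda(\mathcal{Z}_0\cap B)/\lambda(B)$ in one stroke; you also supply the measurability argument (via Berge's theorem) and make precise the sense in which the discrete set $\hat{\mathcal{Z}}_0$ approximates $\mathcal{Z}_0$ (density through convergence of the empirical measure). The paper's increment bound is short and self-contained, but strictly speaking ``$\Delta\mathcal{J}_{[N_s]}\to 0$'' alone does not imply convergence of the sequence; your SLLN argument closes that gap and, unlike the paper, explicitly identifies the limiting constant.
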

\begin{proof}
To demonstrate the convergence of $\mathcal{J}_{[N_s]}(\hat{\mathcal{Z}}_0, \mathcal{S})$ as $N_s$ approaches infinity, we examine the magnitude of the Jaccard index gradient:
    \begin{equation*}
    \begin{split}
        \Delta \mathcal{J}_{[N_s]} (\hat{\mathcal{Z}}_0, \mathcal{S})  &= |\mathcal{J}_{[N_s+\Delta N_s]}(\hat{\mathcal{Z}}_{0}, \mathcal{S}) - \mathcal{J}_{[N_s]}(\hat{\mathcal{Z}}_{0}, \mathcal{S})| \\
        &\leq \left|\frac{N_{z} + \Delta N_s}{N_s + \Delta N_s} - \frac{N_z}{N_s} \right|
        \end{split}
    \end{equation*}
    where $ \Delta \mathcal{J}_{[N_s]} (\hat{\mathcal{Z}}_0, \mathcal{S})\geq 0$ and $N_s \gg \Delta N_s > 0$. In addition, $N_z$ is the cardinarity of $\hat{\mathcal{Z}}_0$ when computing $\mathcal{J}_{[N_s]}(\hat{\mathcal{Z}}_0, \mathcal{S})$. Then, we express the upper bound of $\Delta \mathcal{J}_{[N_s]}(\hat{\mathcal{Z}}_0, \mathcal{S})$ as $N_s$ goes to $\infty$: 
    \begin{equation*}
        \begin{split}
            \lim_{N_s \to \infty} \Delta \mathcal{J}_{[N_s]} (\hat{\mathcal{Z}}_0, \mathcal{S}) \leq&  \lim_{N_s \to \infty} \left|\frac{N_s \Delta N_s - N_z \Delta N_s}{N_s(N_s + \Delta N_s)} \right|\\
            \leq & \lim_{N_s \to \infty} \left| \frac{\Delta N_s}{N_s + \Delta N_s}\right|=0. 
        \end{split}
    \end{equation*}
Since we assume the state samples are uniformly distributed, this convergence of $\Delta \mathcal{J}_{[N_s]}(\hat{\mathcal{Z}}_0, \mathcal{S})$ indicates that $\lim{N_s\to \infty} \mathcal{J}_{[N_s]} (\hat{\mathcal{Z}}_0, \mathcal{S}) = \mathcal{J}(\mathcal{Z}_0, \mathcal{Z})$, which implies that $\hat{\mathcal{Z}}_0 = \mathcal{Z}_0$ and $\mathcal{S} = \mathcal{Z}$.
\end{proof}
\noindent
As mentioned in Proposition \ref{prop1}, generating a large number of samples is ideal for obtaining $\hat{\mathcal{Z}}_0$, which closely approximates $\mathcal{Z}_0$. However, the need for computational efficiency requires us to establish two conditions for this sampling procedure:
\begin{equation}
    N_s \geq N_{s,\min}, \quad \Delta \mathcal{J}_{[N_s]}(\hat{\mathcal{Z}}_0,\mathcal{S})\leq \delta
\end{equation}
where $N_{s,\min}$ and $\delta$ are the predefined minimum number of samples and threshold for the magnitude of Jaccard index gradient. 

Once we have obtained $\hat{\mathcal{Z}}_0$, we collect a boundary set of $\hat{\mathcal{Z}}_0$, defined as follows:
\begin{equation}
    \begin{split}
        \partial \hat{\mathcal{Z}}_0 \coloneqq \{ \bm{x} \in \hat{\mathcal{Z}}_0:& \exists (\bm{y}_1, \bm{y}_2)\textrm{ where }\bm{y}_1 \in \hat{\mathcal{Z}}_{0}, \bm{y}_2 \notin \hat{\mathcal{Z}}_0  
 \\
 &\textrm{ within the }\epsilon\textrm{-neighborhood of }\bm{x}\}
    \end{split}
\end{equation}
where $\epsilon>0$. It holds that $\partial \hat{\mathcal{Z}}_{0} \subset \mathcal{C}$, although we cannot guarantee that $\partial \hat{\mathcal{Z}}_0 \subseteq \partial \mathcal{C}$.
\begin{assumption}
$\partial \hat{\mathcal{Z}}_0$ is open, so we assume that $\hat{\mathcal{Z}}_0$ is dense enough, and there exists an approximated set $\partial \widetilde{\mathcal{Z}}_0 \subset \partial \hat{\mathcal{Z}}_0$ that is closed where its elements satisfy that the optimal cost of \eqref{opt_qp_zero} is zero. We define a set $\widetilde{\mathcal{Z}}_0 \coloneqq \partial \widetilde{\mathcal{Z}}_0 \cup \textrm{int}(\widetilde{\mathcal{Z}}_0)$ where $\hat{\mathcal{Z}}_0 \subset \widetilde{\mathcal{Z}}_0$ and forward invariant.   
\end{assumption}
\noindent
Now, the unknown variables $d$, $\bm{c}$, and $\varepsilon$ are determined to maximize an approximated set $\mathcal{P} \subset \mathcal{C}$, which lies inside the boundary $\partial \hat{\mathcal{Z}}_0$ (in cases where $\mathcal{C}$ is closed).
\begin{equation} \label{opt_offset}
    \begin{split}
        \max_{d, \bm{c}, \varepsilon} & \quad  | \mathcal{P} | = \iiint_V z(d \bm{x} +  \bm{c}) + \varepsilon \; dV \\
        \textrm{s.t.} & \quad z( d \bm{x}_{k} + \bm{c}) + \varepsilon < 0, \quad \forall \bm{x}_{k} \in \partial \hat{\mathcal{Z}}_0.
    \end{split}
\end{equation}
where $V$ is the $n$-dimensional region over which we are integrating, and $dV$ represents an infinitesimal $n$-dimensional volume element. If the set $\mathcal{C}$ is not closed, we need to adjust $V$ to make the cost finite in the above optimization. Using the optimal solution to the above optimization problem $d^{\star}$, $\bm{c}^{\star}$, and $\varepsilon^{\star}$, we define a CBF, $h_{\textrm{approx}}(\bm{x}) = z(d^{\star}\bm{x} + \bm{c}^{\star}) + \varepsilon^{\star}$.
\begin{lemma} \label{lemma1}
The set defined by the approximated CBF, $\mathcal{P}_{\textrm{approx}} = \{ \bm{x}\in \mathcal{X}: h_{\textrm{approx}}(\bm{x}) \geq 0 \}$, is forward invariant. 
\end{lemma}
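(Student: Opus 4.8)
The plan is to verify the boundary (sub‑tangentiality) condition \eqref{CBF_ineq_bound} for the candidate set $\mathcal{P}_{\textrm{approx}}$, from which forward invariance follows exactly as in the discussion preceding \eqref{CBF_ineq_bound}. Writing $\partial\mathcal{P}_{\textrm{approx}} = \{\bm{x}\in\mathcal{X}: h_{\textrm{approx}}(\bm{x})=0\}$, I will show $\sup_{\bm{u}\in\mathcal{U}}\dot{h}_{\textrm{approx}}(\bm{x},\bm{u})\geq 0$ for every $\bm{x}\in\partial\mathcal{P}_{\textrm{approx}}$. The first step is to push this back onto $z$: by Assumption~\ref{assumption1} applied with the optimal uniform scaling $d^\star>0$, $\dot{h}_{\textrm{approx}}(\bm{x},\bm{u}) = \frac{\partial h_{\textrm{approx}}}{\partial\bm{x}}(\bm{x})\big(f(\bm{x})+g(\bm{x})\bm{u}\big) = d^\star\dot{z}(\bm{x},\bm{u})$, hence $\sup_{\bm{u}\in\mathcal{U}}\dot{h}_{\textrm{approx}}(\bm{x},\bm{u}) = d^\star\sup_{\bm{u}\in\mathcal{U}}\dot{z}(\bm{x},\bm{u})$, and since $d^\star>0$ the left side is nonnegative iff $\sup_{\bm{u}\in\mathcal{U}}\dot{z}(\bm{x},\bm{u})\geq 0$. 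Moreover, if $\bm{x}\in\mathcal{Z}_0$ there is $\bm{u}_0\in\mathcal{U}$ with $\dot{z}(\bm{x},\bm{u}_0)=0$, so $\sup_{\bm{u}\in\mathcal{U}}\dot{z}(\bm{x},\bm{u})\geq 0$. Thus it suffices to prove the inclusion $\partial\mathcal{P}_{\textrm{approx}}\subseteq\mathcal{Z}_0$.

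The second step is to establish $\mathcal{P}_{\textrm{approx}}\subseteq\widetilde{\mathcal{Z}}_0$. The feasibility constraint in \eqref{opt_offset} forces $h_{\textrm{approx}}(\bm{x}_k)=z(d^\star\bm{x}_k+\bm{c}^\star)+\varepsilon^\star<0$ at every sampled boundary point $\bm{x}_k\in\partial\hat{\mathcal{Z}}_0$, and since $\partial\widetilde{\mathcal{Z}}_0\subset\partial\hat{\mathcal{Z}}_0$ the same strict inequality holds on $\partial\widetilde{\mathcal{Z}}_0$; hence the closed $0$-superlevel set $\mathcal{P}_{\textrm{approx}}$ is disjoint from $\partial\widetilde{\mathcal{Z}}_0$. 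Because \eqref{opt_offset} additionally maximizes the enclosed volume, the optimizer $\mathcal{P}_{\textrm{approx}}$ is the region lying inside $\partial\hat{\mathcal{Z}}_0$ (as already noted in the paragraph defining $\mathcal{P}$), so $\mathcal{P}_{\textrm{approx}}\subseteq\widetilde{\mathcal{Z}}_0$, and being disjoint from $\partial\widetilde{\mathcal{Z}}_0$ it in fact lies in $\mathrm{int}(\widetilde{\mathcal{Z}}_0)$; in particular $\partial\mathcal{P}_{\textrm{approx}}\subseteq\widetilde{\mathcal{Z}}_0$. (This also shows $\mathcal{P}_{\textrm{approx}}\subseteq\mathcal{Z}$, consistent with the convention $\mathcal{C}\subseteq\mathcal{Z}$.)

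The third step closes the gap $\widetilde{\mathcal{Z}}_0\subseteq\mathcal{Z}_0$. With $\mathcal{U}$ compact and $z$ continuously differentiable, the maps $\bm{x}\mapsto\sup_{\bm{u}\in\mathcal{U}}\dot{z}(\bm{x},\bm{u})$ and $\bm{x}\mapsto\inf_{\bm{u}\in\mathcal{U}}\dot{z}(\bm{x},\bm{u})$ are continuous, so $\mathcal{Z}_0 = \{\bm{x}\in\mathcal{Z}: \inf_{\bm{u}}\dot{z}(\bm{x},\bm{u})\leq 0\leq \sup_{\bm{u}}\dot{z}(\bm{x},\bm{u})\}$ is closed (as is $\mathcal{Z}$). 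Since $\hat{\mathcal{Z}}_0\subseteq\mathcal{Z}_0$ by construction and, by the density assumption on $\hat{\mathcal{Z}}_0$ together with the assumed closedness of $\widetilde{\mathcal{Z}}_0$, $\widetilde{\mathcal{Z}}_0$ is the closure of a set containing $\hat{\mathcal{Z}}_0$, we get $\widetilde{\mathcal{Z}}_0\subseteq\overline{\mathcal{Z}_0}=\mathcal{Z}_0$. Combined with the second step this gives $\partial\mathcal{P}_{\textrm{approx}}\subseteq\mathcal{Z}_0$, and by the first step $\sup_{\bm{u}\in\mathcal{U}}\dot{h}_{\textrm{approx}}(\bm{x},\bm{u})\geq 0$ on $\partial\mathcal{P}_{\textrm{approx}}$, i.e. condition \eqref{CBF_ineq_bound} with $\mathcal{C}$ replaced by $\mathcal{P}_{\textrm{approx}}$; as noted there, this is equivalent to \eqref{CBF_eq_bound} on the boundary. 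Picking any $\alpha\in\mathcal{K}^{e}_{\infty}$, for which $\alpha(h_{\textrm{approx}})=0$ on $\partial\mathcal{P}_{\textrm{approx}}$ (and, if one wants the full Definition~\ref{def2} inequality on all of $\mathcal{P}_{\textrm{approx}}$, an $\alpha$ with large enough slope, using continuity of $\dot{h}_{\textrm{approx}}$ and that $h_{\textrm{approx}}$ is bounded away from $0$ on the compact interior core), $h_{\textrm{approx}}$ renders $\mathcal{P}_{\textrm{approx}}$ forward invariant, as claimed.

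The main obstacle is the inclusion $\mathcal{P}_{\textrm{approx}}\subseteq\widetilde{\mathcal{Z}}_0$ in the second step: \eqref{opt_offset} only enforces $h_{\textrm{approx}}<0$ at finitely many sampled boundary points, so concluding that the \emph{continuous} set $\mathcal{P}_{\textrm{approx}}$ really sits inside $\mathrm{int}(\widetilde{\mathcal{Z}}_0)$ --- rather than leaking across $\partial\widetilde{\mathcal{Z}}_0$ between samples, or acquiring a spurious component outside $\widetilde{\mathcal{Z}}_0$ --- relies entirely on the density hypothesis on $\hat{\mathcal{Z}}_0$, the assumed closedness of $\widetilde{\mathcal{Z}}_0$ with $\partial\widetilde{\mathcal{Z}}_0\subset\partial\hat{\mathcal{Z}}_0$, and the geometric effect of the volume maximization in \eqref{opt_offset}. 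Making this precise (e.g. quantifying how dense the sampling must be relative to $\epsilon$ for the enclosed region to be a genuine subset of $\widetilde{\mathcal{Z}}_0$) is where the quality of the data-driven approximation enters; the first and third steps are then routine.
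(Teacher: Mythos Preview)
Your argument is correct and takes a genuinely different route from the paper's. The paper's proof does not verify the CBF boundary condition at all: it writes the variation-of-constants formula, invokes local Lipschitz continuity and the density of $\hat{\mathcal{Z}}_0$ to assert that a feasible input signal steering any $\bm{x}_0\in\widetilde{\mathcal{Z}}_0$ to points of $\hat{\mathcal{Z}}_0$ exists, observes $\mathcal{P}_{\textrm{approx}}\subseteq\widetilde{\mathcal{Z}}_0$, and then concludes forward invariance of $\mathcal{P}_{\textrm{approx}}$ \emph{from the assumed forward invariance of $\widetilde{\mathcal{Z}}_0$} (Assumption~2). That last inference---``subset of a control-invariant set is control-invariant''---is not valid in general, so the paper's argument is really leaning on the assumption rather than proving anything new.

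Your approach instead establishes the Nagumo/CBF condition \eqref{CBF_ineq_bound} directly: you reduce $\dot{h}_{\textrm{approx}}$ to $d^\star\dot{z}$ via Assumption~\ref{assumption1}, show $\partial\mathcal{P}_{\textrm{approx}}\subseteq\mathcal{Z}_0$ by combining the feasibility constraint of \eqref{opt_offset} with closedness of $\mathcal{Z}_0$, and then read off $\sup_{\bm{u}}\dot{h}_{\textrm{approx}}\geq 0$ on the boundary. This buys you two things the paper's route does not: (i) you never need the forward-invariance clause in the assumption on $\widetilde{\mathcal{Z}}_0$, only its role as a closed continuum approximant of the sampled set; and (ii) you actually certify that $h_{\textrm{approx}}$ is a CBF in the sense of Definition~\ref{def2}, which is what legitimizes the downstream safety filter \eqref{safety_filter_single}. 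The shared weak point---which you correctly flag---is the discrete-to-continuous inclusion $\mathcal{P}_{\textrm{approx}}\subseteq\widetilde{\mathcal{Z}}_0$; both proofs ultimately rely on the informal density hypothesis there.
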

\begin{proof}
The dynamical system is locally Lipschitz continuous, and $\hat{\mathcal{Z}}_0$ is dense enough to satisfy that the control input $\bm{u}(\tau) \in \mathcal{U}$ exists with $\bm{x}_0 \in \widetilde{\mathcal{Z}}_0$ and $\bm{x}_{z}\in \hat{\mathcal{Z}}_0$ for all $\tau \in [t_0, t]$ such that:
\begin{equation}
\bm{x}_z = \bm{\Phi}(t, t_0) \bm{x}_0 +\int_{t_0}^{t} \bm{\Phi}(t, \tau)g(\bm{x}(\tau)) \bm{u}(\tau) d\tau
\end{equation}
where $\bm{\Phi}(t,t_0)$ denotes the state transition matrix describing the evolution of state from $t_0$ to $t$ where $t$ is finite. In this case, $\mathcal{P}_{\textrm{approx}} \subseteq \widetilde{\mathcal{Z}}_0$, $\mathcal{P}_{\textrm{approx}}$ is forward invariant, since $\widetilde{\mathcal{Z}}_0$ is forward invariant.
\end{proof}

The set defined by the approximated CBF, $\mathcal{P}_{\textrm{approx}} = \{ \bm{x}\in \mathcal{X}: h_{\textrm{approx}}(\bm{x}) \geq 0 \}$, being forward invariant, we utilize $h_{\textrm{approx}}$ as a CBF for synthesizing an optimization-based controller with the desired output value $\bm{\xi} \in \mathbb{R}^{p}$:
\begin{equation} \label{safety_filter_single}
\begin{split}
    \mathbf{k}(\bm{x},\bm{\xi}) = \argmin_{\bm{u} \in \mathcal{U}} & \quad \|\mathbf{k}^{d}(\bm{x}, \bm{\xi}) - \bm{u}  \|^{2},\\
    \textrm{s.t.}  &\quad \dot{h}_{\textrm{approx}}(\bm{x}, \bm{u}) \geq - \alpha(h_{\textrm{approx}}(\bm{x}))
\end{split}
\end{equation}
where $\mathbf{k}^d: \mathcal{X} \times \mathbb{R}^{p} \to \mathbb{R}^{m}$ represents the desired controller, which is locally Lipschitz continuous. It is important to note that the initial state must be inside $\mathcal{P}_{\textrm{approx}}$ to ensure safety. 

\subsection{Non-uniform Scaling}
We can introduce a variation of the CBF with scaling coefficients. The following proposition demonstrates that the scaling coefficients of the candidate function must satisfy specific conditions to be utilized in obtaining the CBF. We introduce another assumption for deriving the CBF with non-uniform scaling of the HCF.
\begin{assumption} \label{assumption2}
The control barrier function $h$ is defined by using non-uniform scaling parameters $d_1 \geq 0, \cdots, d_{n} \geq 0$ in $\mathbf{D}$. Then, we assume that 
    \begin{equation}
        \frac{\partial h}{\partial \bm{x}}(\bm{x}) = \frac{\partial}{\partial \bm{x}}z(\mathbf{D}\bm{x} + \bm{c}) = \frac{\partial z}{\partial \bm{x}}(\bm{x}) \mathbf{D}.
    \end{equation} 
\end{assumption}
\noindent
Now, based on the above assumption, we reformulate the condition for boundary states of the safe set $\mathcal{C}$ as presented in the following proposition.
\begin{proposition} \label{prop2}
Assume that the boundary of the sampled set, $\partial \hat{\mathcal{Z}}_0$, is obtained. Then, $\partial \hat{\mathcal{Z}}_0$ can be considered as the boundary of a safe set if there exists at least one control input $\bm{u} \in \mathcal{U}$ satisfying $\frac{\partial z}{\partial \bm{x}}(\bm{x}) \overline{\mathbf{D}} \dot{\bm{x}} \geq 0$ for all $\bm{x} \in \partial \hat{\mathcal{Z}}_{0}$, where $\overline{\mathbf{D}} = \textrm{diag}(0,d_2-d_1, \cdots, d_n - d_1)$.
\end{proposition}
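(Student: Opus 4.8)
The plan is to exploit the additive decomposition $\mathbf{D} = d_1\mathbf{I}_n + \overline{\mathbf{D}}$, which follows at once from $\mathbf{D} = \mathrm{diag}(d_1,\dots,d_n)$ and $\overline{\mathbf{D}} = \mathrm{diag}(0,d_2-d_1,\dots,d_n-d_1)$, together with the fact that every sample retained in $\hat{\mathcal{Z}}_0$ already admits an input that zeroes $\dot z$. Concretely, since $\partial\hat{\mathcal{Z}}_0 \subset \hat{\mathcal{Z}}_0 \subset \mathcal{Z}_0$, for each $\bm{x}\in\partial\hat{\mathcal{Z}}_0$ there is $\bm{u}_{\bm{x}}\in\mathcal{U}$ with $\dot z(\bm{x},\bm{u}_{\bm{x}}) = \frac{\partial z}{\partial\bm{x}}(\bm{x})\dot{\bm{x}} = 0$, where $\dot{\bm{x}} = f(\bm{x}) + g(\bm{x})\bm{u}_{\bm{x}}$; this is precisely the membership certificate (via the optimization \eqref{opt_qp_zero}) used when the sample was kept. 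The hypothesis of the proposition is then read as: among such inputs, at least one also satisfies $\frac{\partial z}{\partial\bm{x}}(\bm{x})\overline{\mathbf{D}}\dot{\bm{x}}\geq 0$.

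First I would differentiate the candidate CBF along the closed-loop flow. By Assumption~\ref{assumption2}, $\frac{\partial h}{\partial\bm{x}}(\bm{x}) = \frac{\partial z}{\partial\bm{x}}(\bm{x})\mathbf{D}$, so for the input $\bm{u}_{\bm{x}}$ above,
\[
\dot h(\bm{x},\bm{u}_{\bm{x}}) = \frac{\partial z}{\partial\bm{x}}(\bm{x})\mathbf{D}\dot{\bm{x}}
= d_1\underbrace{\frac{\partial z}{\partial\bm{x}}(\bm{x})\dot{\bm{x}}}_{=\,0} + \frac{\partial z}{\partial\bm{x}}(\bm{x})\overline{\mathbf{D}}\dot{\bm{x}}
= \frac{\partial z}{\partial\bm{x}}(\bm{x})\overline{\mathbf{D}}\dot{\bm{x}}.
\]
Hence the hypothesis $\frac{\partial z}{\partial\bm{x}}(\bm{x})\overline{\mathbf{D}}\dot{\bm{x}}\geq 0$ on $\partial\hat{\mathcal{Z}}_0$ yields $\sup_{\bm{u}\in\mathcal{U}}\dot h(\bm{x},\bm{u}) \geq \dot h(\bm{x},\bm{u}_{\bm{x}}) \geq 0$ for every $\bm{x}\in\partial\hat{\mathcal{Z}}_0$.

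Second, I would translate this into the CBF boundary condition. Choosing $d_1,\bm{c},\varepsilon$ so that $\partial\hat{\mathcal{Z}}_0$ lies on the zero level set $\{h=0\}$ — i.e. so that (in the sense of the density/closure assumption preceding Lemma~\ref{lemma1}) the sampled boundary coincides with $\partial\mathcal{C}$ — we have $\alpha(h(\bm{x})) = 0$ there for any $\alpha\in\mathcal{K}^{e}_{\infty}$, whence $\sup_{\bm{u}\in\mathcal{U}}\dot h(\bm{x},\bm{u}) \geq 0 = -\alpha(h(\bm{x}))$, i.e. \eqref{CBF_ineq_bound}--\eqref{CBF_eq_bound} hold. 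Invoking the same locally-Lipschitz / Nagumo-type forward-invariance argument used in the proof of Lemma~\ref{lemma1} (the flow cannot cross $\partial\mathcal{C}$ outward), the $0$-superlevel set of $h$ is forward invariant, which is exactly the claim that $\partial\hat{\mathcal{Z}}_0$ can be regarded as the boundary of a safe set.

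I expect the main obstacle to be the passage from the \emph{discrete, sampled} boundary $\partial\hat{\mathcal{Z}}_0$ to the \emph{continuous} boundary $\partial\mathcal{C}$ of the set rendered invariant by $h$: one must appeal to the assumption that $\hat{\mathcal{Z}}_0$ is dense enough and that $\widetilde{\mathcal{Z}}_0$ is closed and forward invariant in order to upgrade the pointwise inequality on samples to the boundary condition on all of $\partial\mathcal{C}$, and one must be careful that the input $\bm{u}_{\bm{x}}$ witnessing $\dot z(\bm{x},\bm{u}_{\bm{x}}) = 0$ is the same one for which $\frac{\partial z}{\partial\bm{x}}(\bm{x})\overline{\mathbf{D}}\dot{\bm{x}}\geq 0$ is required — otherwise the cancellation of the $d_1$ term above fails. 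The algebraic core (the splitting $\mathbf{D} = d_1\mathbf{I}_n + \overline{\mathbf{D}}$ and the resulting vanishing of the $d_1$ term) is routine; essentially all the nontrivial content is in the approximation/invariance bookkeeping, which can be borrowed wholesale from Lemma~\ref{lemma1} and the assumptions preceding it.
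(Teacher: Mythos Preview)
Your proposal is correct and follows essentially the same approach as the paper: the paper also uses that points of $\partial\hat{\mathcal{Z}}_0$ admit an input with $\frac{\partial z}{\partial\bm{x}}(\bm{x})\dot{\bm{x}}=0$ and then rewrites $\dot h=\frac{\partial z}{\partial\bm{x}}(\bm{x})\mathbf{D}\dot{\bm{x}}$ so that the $d_1$ contribution cancels, leaving exactly $\frac{\partial z}{\partial\bm{x}}(\bm{x})\overline{\mathbf{D}}\dot{\bm{x}}\geq 0$. The only cosmetic difference is that the paper carries this out componentwise (solving for $d_1\frac{\partial z}{\partial\bm{x}_1}\dot{\bm{x}}_1$ and substituting), whereas you use the cleaner matrix splitting $\mathbf{D}=d_1\mathbf{I}_n+\overline{\mathbf{D}}$; your additional remarks on the discrete-to-continuous passage and on using the \emph{same} input for both conditions are more careful than the paper's own treatment.
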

\begin{proof}
    First, we express $\dot{h}(\bm{x},\bm{u})$ with $z$ as follows:
    \begin{equation} \label{prop2_ineq}
        \begin{split}
        \dot{h}(\bm{x},\bm{u}) =&  \frac{\partial z}{\partial \bm{x}}(\bm{x}) \mathbf{D}\dot{\bm{x}} = \sum_{i=1}^{n}  d_{i} \frac{\partial z}{\partial \bm{x}_{i}}(\bm{x})  \dot{\bm{x}}_{i} \geq 0. 
        \end{split}
    \end{equation}
where $\bm{x}_i$ and $\dot{\bm{x}}_{i}$ denote the $i$-th elements of $\bm{x}$ and $\dot{\bm{x}}$, respectively. Second, the states in $\partial \hat{\mathcal{Z}}_0$ guarantee the existence of control inputs holding $\frac{\partial z}{\partial \bm{x}}(\bm{x})\dot{\bm{x}} = 0$, which implies $d_1 \frac{\partial z}{\partial \bm{x}_1}(\bm{x}) \dot{\bm{x}}_1 = - d_1 (\sum_{i=2}^{n} \frac{\partial z}{\partial \bm{x}_i}(\bm{x}) \dot{\bm{x}}_{i})$. By substituting $d_1 \frac{\partial z}{\partial \bm{x}_1}(\bm{x}) \dot{\bm{x}}_1$ into \eqref{prop2_ineq}, we have the following inequality:
    \begin{equation}
    \begin{split}
        \sum_{i=1}^{n}  d_{i} \frac{\partial z}{\partial \bm{x}_{i}}(\bm{x})  \dot{\bm{x}}_{i} &= \sum_{i=2}^{n} (d_i - d_1) \frac{\partial z}{\partial \bm{x}_{i}}(\bm{x})  \dot{\bm{x}}_{i} \\
        &= \frac{\partial z}{\partial \bm{x}}(\bm{x}) \overline{\mathbf{D}} \dot{\bm{x}} \geq 0.
        \end{split}
    \end{equation}
    for all the states belonging to $\partial \hat{\mathcal{Z}}_0$.
\end{proof}

Taking into account Assumption \ref{assumption2} and Proposition \ref{prop2}, we can modify the optimization problem \eqref{opt_offset} to determine the variables for the approximated CBF as follows:
\begin{align} 
    \max_{\mathbf{D}, \bm{c}, \varepsilon} & \quad  | \mathcal{P} | = \iiint_V z(\mathbf{D} \bm{x} +  \bm{c}) + \varepsilon \; dV  \nonumber \\
    \textrm{s.t.} & \quad z( \mathbf{D} \bm{x}_{k} + \bm{c}) + \varepsilon < 0, \quad \forall \bm{x}_{k} \in \partial \hat{\mathcal{Z}}_0,  \label{opt_nonuniform} \\
    & \quad \frac{\partial z}{\partial \bm{x}}(\bm{x}_{k}) \overline{\mathbf{D}} (f(\bm{x}_{k}) + g(\bm{x}_{k}) \bm{u})\geq 0, \quad \bm{u}_{k} \in \mathcal{U}. \nonumber
\end{align}
This problem is relatively larger in size than \eqref{opt_offset}, given the same system and the same order of HCF. However, it has the potential to define a CBF capturing a larger safe set compared to \eqref{opt_offset}. Importantly, since Lemma \ref{lemma1} holds in this case for the safe set associated with the approximated CBF obtained from \eqref{opt_nonuniform}, the subsequent steps for safety-critical control remain identical to the method presented in the previous section. This involves solving a QP problem with inequality constraints to ensure safety.
 
\subsection{Multiple CBFs}
We have previously assumed that a single CBF can represent a safe set associated with an HCF. However, it is possible to improve the accuracy of this representation by employing multiple CBFs and logical operators, such as AND ($\wedge$) and OR ($\vee$). In this section, we continue to discuss the data-driven process for obtaining $\partial \hat{\mathcal{Z}}_0$. Given that the non-uniform scaling method is a more general approach, we focus on extending this method to multiple CBFs. Suppose we have $s$ CBFs representing a safe set with an HCF. We can formulate the following optimization problem:
\begin{subequations} \label{multi_opt}
    \begin{align}
        \max_{\bm{\Psi}}&\quad \left|\mathcal{P}_{\cap} \coloneqq \bigcap_{j=1}^{s} \mathcal{P}_{j}\right|   \\
        \textrm{s.t.}&\quad  z(\mathbf{D}_{j} \bm{x} + \bm{c}_j) + \varepsilon_{j} \geq 0,\quad \forall \bm{x} \in \mathcal{P}_{\cap} \subset \widetilde{\mathcal{Z}}_{0} \label{eq_multi_z}\\
        &\quad \frac{\partial z}{\partial \bm{x}}(\bm{\nu}) \overline{\mathbf{D}}_{j} (f(\bm{\nu}) + g(\bm{\nu}) \bm{u})\geq 0, \quad \forall \bm{\nu} \in \partial \mathcal{P}_{j} \label{eq_multi_dz} \\
        &\quad \bm{u} \in \mathcal{U}, \quad j \in \{ 1, \cdots, s \}.
    \end{align}
\end{subequations}
where $\bm{\Psi} = (\Psi_{1}, \cdots, \Psi_{s})$ represents a collection of tuples containing decision variables, where $\Psi_{j} = (\mathbf{D}_{j}, \bm{c}_j, \varepsilon_{j})$. The constraints \eqref{eq_multi_z} imply that the approximated safe set is defined as $\mathcal{P}_{\cap} = \{ \bm{x} \in \widetilde{\mathcal{Z}}_0 : (h_{\textrm{approx},1}(\bm{x}) \geq 0) \wedge \cdots \wedge (h_{\textrm{approx},s}(\bm{x}) \geq 0) \}$. Regarding the general form of this approach, we offer the following remarks to clarify the significance of the number of CBFs in the approximation process:
\begin{remark} \label{remark1}
If the function $z(\bm{x})$ has an order greater than 1, and there are no identical tuples in the optimal solution to \eqref{multi_opt} involving $s$ number of CBFs, the optimal cost will exceed that of the solution to \eqref{multi_opt} with $\ell$ number of CBFs, where $\ell < s$. This implies that if identical tuples do exist in the optimal solution with $s$ number of CBFs, it is possible to define a safe set with the same optimal cost using only $s-1$ number of CBFs.
\end{remark}

\begin{remark} \label{remark2}
When dealing with a linear HCF expressed as $z(\bm{x}) = \mathbf{A}\bm{x} + \bm{b}$, it becomes necessary to examine each tuple to determine whether it satisfies the condition stated in Remark \ref{remark1}. In particular, we need to verify that $h_{\textrm{approx},1}(\bm{x}) \neq \zeta h_{\textrm{approx},2}(\bm{x})$, where $\zeta > 0$. For instance, we can specify $h_{\textrm{approx},1}(\bm{x})$ as $\mathbf{A} \mathbf{D}_1 \bm{x} + \mathbf{A}\bm{c}_{1} + \varepsilon_{1}+c$ and $h_{\textrm{approx},2}(\bm{x})$ as $\mathbf{A} \mathbf{D}_2 \bm{x} + \mathbf{A}\bm{c}_{2} + \varepsilon_{2}+c$. If conditions $\mathbf{D}_1 = \zeta \mathbf{D}_2$ and $\mathbf{A}\bm{c}_{1} + \varepsilon_{1}+c = \zeta(\mathbf{A}\bm{c}_{2} + \varepsilon_{2}+c)$ hold true, then it's important to note that one CBF among them only holds significance within the approximated safe set.
\end{remark}

Similar to the safety filter described in \eqref{safety_filter_single}, we formulate the QP problem involving the approximated CBFs. Given the desired output value $\bm{\xi}$ and the desired controller $\bm{k}^{d}$, the QP problem is formulated as follows:
    \begin{align}
        \bm{k}(\bm{x}, \bm{\xi}) = \argmin_{\bm{u}\in \mathcal{U}}& \quad \| \bm{k}^{d}(\bm{x}, \bm{\xi}) - \bm{u} \|^{2},\\
        &\quad  \dot{h}_{\textrm{approx},i}(\bm{x}, \bm{u}) \geq - \alpha_{i}(h_{\textrm{approx},i}(\bm{x})), \nonumber \\
        & \quad \forall i \in \{1, \cdots, s\} \nonumber
    \end{align}
where $\bm{x} \in \textrm{int}(\mathcal{P}_{\cap})$. It is crucial to select appropriate functions $\alpha_{i}$ to ensure the safety of the systems controlled by $\bm{k}(\bm{x}, \bm{\xi})$. 

\section{Validation}
\label{section4}
This section aims to validate the proposed data-driven approach using a simple system with a constrained input space. Our example involves a double integrator system with a state vector $\bm{x} = [ x, \dot{x}]^{\top}$, where $x$ and $\dot{x}$ represent position and velocity, respectively. The state space model of the closed-loop system can be described as:
\begin{equation}
    \dot{\bm{x}} = \underbrace{\left[\begin{array}{cc} 0 & 1 \\ 0 & 0 \end{array} \right]\bm{x}}_{f(\bm{x})} + \underbrace{\left[\begin{array}{c}  0  \\ 1 \end{array} \right]}_{g(\bm{x})} u 
\end{equation}
where $u \in [u_{\min}, u_{\max}]$ and $\dot{x} \in [v_{\min}, v_{\max}]$. The HCF, in this case, is given by $z(\bm{x}) = \gamma_1 - x \geq 0$, with a constant $\gamma_1$. However, there are two critical issues with this setup: 1) $\dot{z}$ is not a function of $u$, and 2) We cannot guarantee $\mathcal{Z}$ is forward invariant where $\mathcal{Z} \coloneqq \{\bm{x} : z(\bm{x}) \geq 0 \}$. Due to inertia, the limited control input cannot prevent the state from crossing the boundary of $\mathcal{Z}$. To address this, we introduce an additional damping term to prevent penetration near the boundary of the hard constraint:
\begin{equation}
    z(\bm{x}) = \gamma_1 - x -  \mathds{1}_{\dot{x}> 0} \gamma_2 \dot{x} 
\end{equation}
where $\gamma_2\geq0$ and $\mathds{1}_{\dot{x}>0}$ is an indicator function that takes the value of $1$ when $\dot{x} > 0$ and $0$ otherwise. We can then compute the time derivative of $z$ as follows:
\begin{equation}
    \dot{z}(\bm{x}, u)= - \dot{x} - \mathds{1}_{\dot{x}>0} \gamma_2 u.
\end{equation}
If $\dot{x}<0$, there is no control input that can make $\dot{z}(\bm{x}, u) = 0$, which implies that $\dot{z}(\bm{x}, u) >0$ for all $\dot{x}<0$. When $\dot{x}>0$, the time derivative of the candidate function becomes:
\begin{equation}
    \dot{z}(\bm{x}, u) = -\dot{x} - \gamma_{2} u.
\end{equation}
In this case, the control command that satisfies $\dot{z}(\bm{x}, u) = 0$ can be found using a simple algebraic equation: $u = - \frac{\dot{x}}{\gamma_2}$. To make the optimization problem more tractable in this example, we consider the set $\hat{\mathcal{Z}}_0$ as follows:
\begin{equation}
\begin{split}
    \hat{\mathcal{Z}}_{0} = & \{\bm{x} \in \mathcal{S}: \exists u \in \mathcal{U} \; \textrm{ s.t. } \; \dot{z}(\bm{x}, u) =0 \wedge \dot{x} \geq 0\}\\
    & \bigcap \{ \bm{x} \in \mathcal{S}: \dot{x} <0\}.
\end{split}
\end{equation}
This set includes states $\bm{x} \in \mathcal{S}$ with $\dot{x}<0$, ensuring that $\dot{z}(\bm{x}, u) >0$ regardless of the control input $u$.  

\begin{figure}[t] 
\centering
\includegraphics[width=\linewidth]{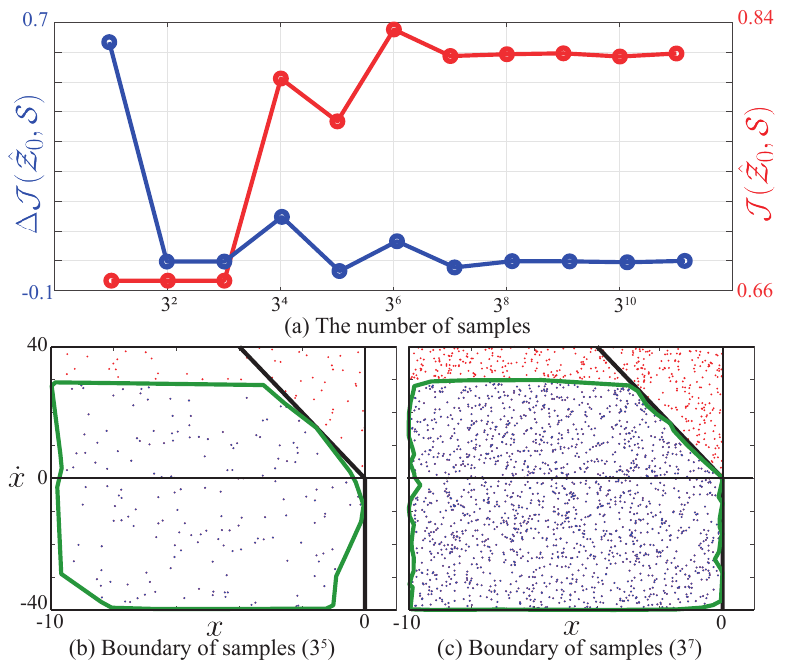}
% \vspace{-7mm}
\caption{\textbf{Data generation and boundary of data set}: (a) Jaccard index ($\mathcal{J}(\hat{\mathcal{Z}}_0, \mathcal{S})$) and variation of Jaccard index ($\Delta\mathcal{J}(\hat{\mathcal{Z}}_0, \mathcal{S})$) as a function of the number of data. (b) and (c) State space data with data sizes of $3^5$ and $3^7$, respectively. Blue and red dots represent data in $\hat{\mathcal{Z}}_0$ and $\mathcal{S} - \hat{\mathcal{Z}}_0$, While black bold lines depict the boundary defined by $z(\bm{x}) = 0$. }
\label{Fig2}
\vspace{-0.5cm}
\end{figure}

\begin{figure}[t] 
\centering
\includegraphics[width=\linewidth]{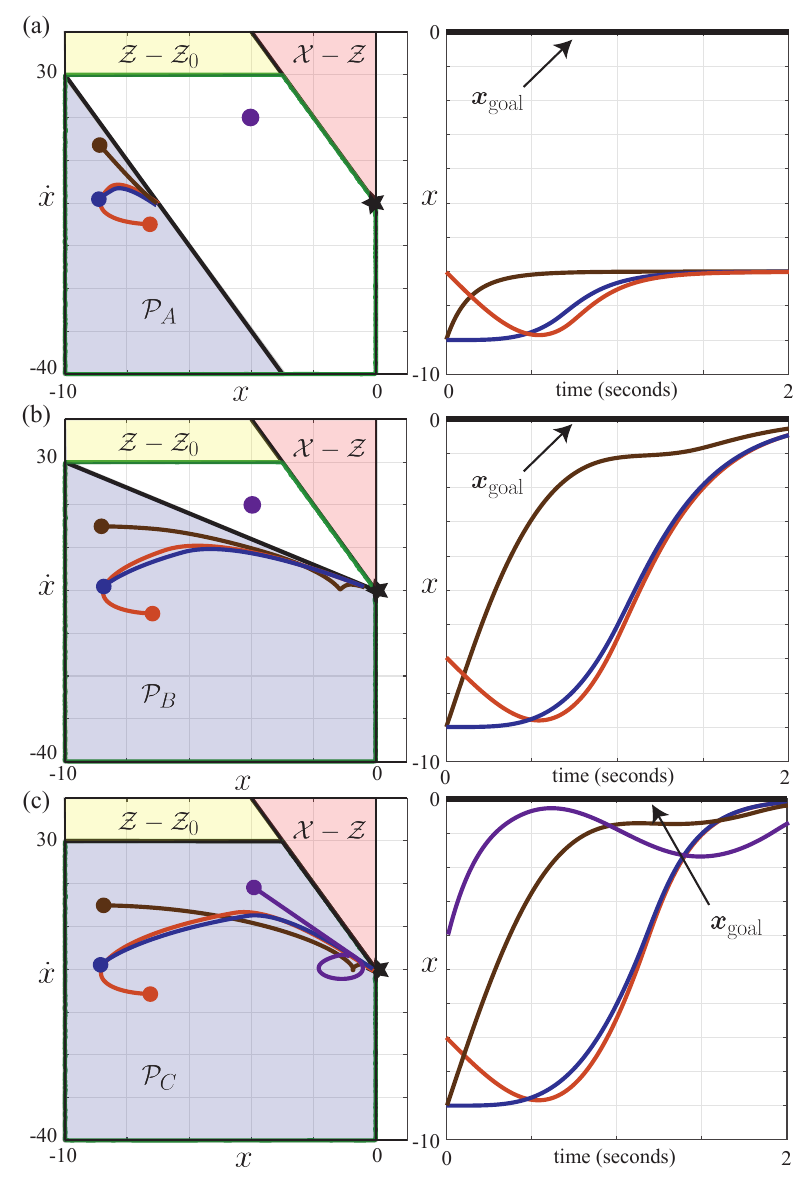}
% \vspace{-7mm}
\caption{\textbf{Safety-critical control results in the state space (Left) and position space (Right)}: (a) Uniformly scaling and offset (III-A), (b) Non-uniform scaling (III-B), (c) Multiple CBFs (III-C). Color-coded results with different initial states: Brown: $\bm{x}_{\textrm{init}}=(-9,15)$, Blue: $\bm{x}_{\textrm{init}}=(-9,0)$, Red: $\bm{x}_{\textrm{init}}=(-7,-5)$, Purple: $\bm{x}_{\textrm{init}}=(-4,20)$.}
\label{Fig3}
\vspace{-0.5cm}
\end{figure}

For the demonstrations, we have chosen specific parameter values: $\gamma_1 = 0$, $\gamma_2 = 0.1$, and the input limits $[u_{\min}, u_{\max}] = [-300,\; 300]$. Data is generated from a uniform distribution within the bounds $x \in [-10,\; 0]$ and $\dot{x} \in [-40,\; 40]$, and we collected data with $N_{s,\min} = 1000$ and $\delta = 0.001$. As depicted in Fig. \ref{Fig2}(a), the Jaccard index converges to a specific constant, and we choose a sample size of $3^{11}$ based on the defined $N_{s, \min}$ and $\delta$. In Figs. \ref{Fig2}(b) and \ref{Fig2}(c), we visualize the states and the boundary of $\hat{\mathcal{Z}}_0$ with different numbers of data points, specifically $3^{5}$ and $3^{7}$. Within these figures, blue dots indicate states within $\hat{\mathcal{Z}}_0$ that lie within the integration region, while a green line represents its boundary $\partial \hat{\mathcal{Z}}_0$. Furthermore, red dots signify states in $\mathcal{S} - \hat{\mathcal{Z}}_0$, where $\dot{z}(\bm{x}, u) < 0$ for any control input $u \in \mathcal{U}$ when $\bm{x} \in \mathcal{S} - \hat{\mathcal{Z}}_0$. As the number of data points increases, the boundary becomes simpler in this example.

With ample data, the state space can be divided into several polytopes, as shown in Fig. \ref{Fig3}. These polytopes include $\mathcal{X} - \mathcal{Z}$ (red), $\mathcal{Z} - \mathcal{Z}_0$ (yellow), $\partial \mathcal{Z}_0$ (green line), and its interior. More specifically, states within the red area fail to satisfy the state hard constraint. The states in the yellow region satisfy the constraint but are considered unsafe since the set is not forward invariant. The QP problem described in \eqref{safety_filter_single} with $z$ becomes infeasible in such cases. To clearly define a safe set, which is forward invariant with the satisfaction of the constraints, we propose three approximated CBFs using our method: Uniform scaling and offset (III-A, Fig. \ref{Fig3}(a)), Non-uniform scaling (III-B, Fig \ref{Fig3}(b)), and Multiple CBFs (III-C, Fig \ref{Fig3}(c)):
\begin{subequations}
    \begin{align}
        \textrm{III-A: }& h_{\textrm{approx}}(\bm{x}) = -x - \mathds{1}_{\dot{x}>-70} (0.1\dot{x} + 7)  \\
        \textrm{III-B: }& h_{\textrm{approx}}(\bm{x}) = -x - \mathds{1}_{\dot{x}>0} \frac{\dot{x}}{3}\\
        \textrm{III-C: }& h_{\textrm{approx},1}(\bm{x}) = -x - \mathds{1}_{\dot{x}>0} 0.1 \dot{x}  \\
        & h_{\textrm{approx},2}(\bm{x}) = - \dot{x} + 30.
    \end{align}
\end{subequations}
The safe sets defined by the approximated CBFs are represented by blue regions in Fig. \ref{Fig3}. To validate the defined CBFs, we executed the safety filter defined in \eqref{safety_filter_single} with a simple P controller $\bm{k}(\bm{x}, \bm{\xi}) = k_p(\xi-x)$, where $k_p > 0$ is the proportional gain, and $\xi$ is generated using cubic spline interpolation. We set a goal state, $\bm{x}_{\textrm{goal}} = (0,\;0)$, marked by black stars in Fig. \ref{Fig3}, and used four different initial states: $\bm{x}_{\textrm{init},1} = (-9,\;15)$, $\bm{x}_{\textrm{init},2} = (-9,\;0)$, $\bm{x}_{\textrm{init},3} = (-7,\;-5)$, and $\bm{x}_{\textrm{init},4} = (-4,\;20)$.

The safety-critical control using the approximated CBFs results in distinct behaviors, as shown in Fig. \ref{Fig3}. The first method provides a relatively limited safe set ($\mathcal{P}_{A}$) that includes $\bm{x}_{\textrm{init},1}$, $\bm{x}_{\textrm{init},2}$, and $\bm{x}_{\textrm{init},3}$ but excludes $\bm{x}_{\textrm{init},4}$ and $\bm{x}_{\textrm{goal}}$, as shown in Fig. \ref{Fig3}(a). While the states starting from $\bm{x}_{\textrm{init},1}$, $\bm{x}_{\textrm{init},2}$, and $\bm{x}_{\textrm{init},3}$ remain within the defined safe set due to the designed safety filter, the system cannot reach a destination outside of the safe set. The safe set obtained by the second method ($\mathcal{P}_{B}$) is slightly larger than that of the first method, encompassing the goal state, as shown in Fig. \ref{Fig3}(b). The system approaches the goal due to the expanded safe set. Lastly, the combination of two CBFs based on the third method ($\mathcal{P}_{C}$) covers the entire region of $\mathcal{Z}_0$, as shown in Fig. \ref{Fig3}(c). In this case, the initial state $\bm{x}{\textrm{init},4}$ belongs to the safe set, allowing the system to reach the goal when starting from $\bm{x}_{\textrm{init},4}$. As explained in Remarks \ref{remark1} and \ref{remark2}, multiple CBFs can define a larger safe set when two CBFs are not identical. Due to the effect of the lower bound determined by $\alpha$ in the safety filter \eqref{safety_filter_single}, the terminal state cannot exactly reach the goal, which lies on the boundary of the safe sets.

\section{Conclusion}
This study addresses a critical issue stemming from the misalignment between state constraint functions and control barrier functions. We demonstrate that simple state constraints fail to guarantee the forward invariance of the constrained set, rendering the constraint functions incapable of serving as CBFs. To tackle this challenge, our study introduces three data-driven techniques for establishing CBFs while leveraging constraint functions: (1) Uniform scaling and Offset, (2) Non-Uniform scaling, and (3) Multiple CBFs. By employing a straightforward example involving a double integrator system, we illustrate the efficacy of each method in ensuring system safety during control.

In the near future, we intend to validate the applicability of our data-driven approaches with higher-order CBFs in more complicated and uncertain systems, such as legged robots. Also, addressing the complex task of identifying boundaries within $n$-dimensional state sets, where $n\geq4$, poses a formidable challenge. Consequently, we will explore efficient methods for boundary extraction from available datasets.

\bibliographystyle{IEEEtran}
\balance
\bibliography{l_css}

\end{document}